\documentclass[onecolumn,12pt]{IEEEtran}

\usepackage{amsthm}
\usepackage{amssymb}
\usepackage{amsmath}
\usepackage{bbm}
\newtheorem{definition}{Definition}
\newtheorem{theorem}{Theorem}

\newtheorem{lem}{Lemma}
\usepackage{times}
\usepackage{algorithm}
\usepackage{algpseudocode}

\usepackage{setspace}
\doublespacing
%
%




\begin{document}

%

%

\title{Reinforcement Learning for Mean Field Game}
\author{Mridul Agarwal, Vaneet Aggarwal, Arnob Ghosh, and Nilay Tiwari \thanks{The author names are written in an alphabetical ordering. 
		
M. Agarwal and V. Aggarwal are with Purdue University. The work of A. Ghosh and N. Tiwari was performed when they were at Purdue University. }}


\maketitle
\begin{abstract}
Stochastic games provide a framework for interactions among multiple agents and enable a myriad of applications. In these games, agents decide on actions simultaneously, the state of every agent moves to the next state, and each agent receives a reward. However, finding an equilibrium (if exists) in this game is often difficult when the number of agents becomes large. This paper focuses on finding a mean-field equilibrium (MFE) in an action coupled stochastic game setting in an episodic framework. It is assumed that the impact of the other agents' can be assumed by the empirical distribution of the mean of the actions. All agents know the action distribution and employ lower-myopic best response dynamics to choose the optimal oblivious strategy. This paper proposes a posterior sampling based approach for reinforcement learning in the mean-field game, where each agent samples a transition probability from the previous transitions. We show that the policy and action distributions converge to the optimal oblivious strategy and the limiting distribution, respectively, which constitute an MFE. 

\end{abstract}
\section{Introduction}
We live in a world where multiple agents interact repeatedly in a common environment. For example, multiple robots interact to achieve a specific goal. Multi-agent reinforcement learning (MARL) refers to the problem of learning and planning in a sequential decision-making system with unknown underlying system dynamic. The agents need to learn the system by trying different options and observing their actions. Learning in a MARL is fundamentally different from the traditional single-agent reinforcement learning problem (RL) since agents not only interact with the environment but also with each other. Thus, an agent when tries to learn the underlying system dynamics has to consider the action taken by the other agents. Changes in the policy (or actions) of any agent affect the others and vice versa.

One natural learning algorithm is to extend existing RL algorithms to the MARL by assuming that the other agents’ actions are independent.   However,  studies show that a {\em smart} agent which learns the joint actions of the others performs better as compared to the agent that does not learnt the joint action of other agents \cite{tan1993multi,panait}.  For any agent, the actions of other agents become a part of the state. This results in the state space increase exponentially as the number of agents increase. When the agents are strategic, {\em i.e.}, they only want to take actions which maximize their own utility (or value), Nash equilibrium is often employed as the equilibrium concept. The existing equilibrium solving approaches work for some restricted games when there exists an adversarial equilibrium or coordination equilibrium \cite{littman}. Also, these approaches can handle a handful of agents because of the exponential increase in the state space. The computational complexity of finding Nash Equilibrium at every stage game prevents applications of these approaches in games where the number of agents is large \cite{hu}.

In this paper, we consider MARL for an environment where a large number of agents co-exist. Similar to \cite{lang}, we utilize a mean-field approach where we assume that the Q-function of an agent is affected by the mean actions of the others. Mean-field game drastically reduces the complexity, since an agent only  needs to consider the empirical distribution of the actions played by other agents. Further, we consider an {\em oblivious} strategy \cite{johari}, where each agent takes an action based only on its state. An agent does not have to track the policy evolution of the other agents. Unlike \cite{lang}, we consider a generalized version of the game where the state can be different for different agents. Further, we do not consider a game where the adversarial equilibrium or coordinated equilibrium is required to be present. We also do not need to track the action and the realized Q-value of other agents as was the case in \cite{lang}. 

Such mean-field games exist in several domains. For example, the mean-field game is observed in a {\em security game} where a large number of agents make individual decisions about their own security \cite{miao2019optimal,kolokoltsov2016mean}. However, the ultimate security depends on the decisions made by other agents. For example, in a network of computers, if an agent invests heavily in building firewalls, its computer can still be breached if the agents' computers are not secure. In the security game, each agent invests a certain amount to attain a security level. However, the investment level depends on the investment made by the other agents. If the number of agents is large, the game can be modeled as the mean-field game as the average investment made by per agent impacts the decision of an agent. Another example of a mean-field game is the demand response price in the smart grid \cite{li2015energy,farzaneh2018deterministic}. The utility company sets a price based on the average demand per household. Hence, if at a certain time, the average demand is high, the agent may want to reduce its consumption to decrease the price.

Unlike the standard literature on the mean-field equilibrium on stochastic games \cite{johari,glynn,adlakha}, we consider that the transition probabilities are unknown to the agents. Instead, each agent learns the underlying transition probability matrix a readily implementable posterior sampling approach \cite{agrawal,osband2013more}. 
All agents employ best response dynamics to choose the best response strategy which maximizes the (discounted) payoff for the remaining episode length. We show the asymptotic convergence of the policy and the action distribution to the optimal oblivious strategy and the limiting action distribution respectively. Also, the converged point constitutes a mean-field equilibrium (MFE). We use the compactness of state and action space for the convergence. We estimate the value function using backward induction and show that the value function estimates converge to the optimal value function of the true distribution.

Recently, the authors of \cite{subramanian2019reinforcement} studied a policy-gradient based approach to achieve mean-field equilibrium. In contrast, this paper considers a posterior sampling based approach. The key reason of the choice is that the posterior sampling based approach has guaranteed regret bounds for single-agent learning \cite{osband2013more,agrawal}, which are not there for policy-gradient based approaches. Thus, the proposed work has a potential to lead to analysis of regret bounds in the future. 

The key contribution of the paper is a Posterior-sampling based algorithm that is used by each agent in a multi-agent setting, which is shown to converge to a mean-field equilibrium. The proposed algorithm does not assume the knowledge of transition probabilities, and learns them using a posterior sampling approach.


\section{Background}
\subsection{Stochastic Game}
An $n$-player stochastic game is formalized by the tuple ${\cal M}^*$ = $\{\mathcal{S}, \mathcal{A}, P, r, \tau, \rho, \gamma\}$. The agents are indexed by the set $[n] = \{1, 2, \cdots, n\}$.
The state of the $i^{th}$ agent at time $t$ is given by $s_{i,t} \in \mathcal{S}$, where $\mathcal{S}$ is the state space set. $\mathcal{A}(s)$ is defined as the set of the feasible actions any agent can take in state $s$. $\mathcal{A}$ is the action space set defined as $\bigcup_{s\in \mathcal{S}}\mathcal{A}(s)$. Both $\mathcal{S}$ and $\mathcal{A}$ are finite sets.
The combined state space of the system becomes $\mathcal{S}^{\times n} = \mathcal{S}\times\mathcal{S}\times\cdots\times\mathcal{S}$, and the combined action space of the system becomes $\mathcal{A}^{\times n} = \mathcal{A}\times\mathcal{A}\times\cdots\times\mathcal{A}$.
Let $\mathbf{s}_t = \mathbf{s} \in\mathcal{S}^{\times n}$ be a vector of length $n$ made from states of the $n$ agents at time $t$. Then, if the agents play joint action $\mathbf{a}_t = \mathbf{a} \in \mathcal{A}^{\times n}$ the next state of the system $\mathbf{s}_{t+1} = \mathbf{s'}\in\mathcal{S}^{\times n}$ follows the probability distribution $P(\mathbf{s}_{t+1} = \mathbf{s'}|\mathbf{s}_t = \mathbf{s}, \mathbf{a}_t = \mathbf{s})$. 
Along with the state updates, $i^{th}$ agent also receives a reward $R_{i, t} = r_i(\mathbf{s}_{t}, \mathbf{a}_t, \mathbf{s}_{t+1}) \in [0,1]$. The constant $\gamma\in [0,1)$ is the discount factor, and $\rho$ is the initial state distribution such that $\mathbf{s}_{0} \sim \rho$.

We consider an episodic framework where the length of the time horizon is $\tau$. State space set $\mathcal{S}$, action space set $\mathcal{A}$, $\tau$ are known and need not be learned by the agent. We consider that the game is played in episodes $k = 0, 1, 2,\cdots$. The length of each episode is given by $\tau$. In each episode, the game is played in discrete steps, $j \in [\tau]= \{1, 2,\cdots, \tau\}$. The episodes begin at times $t_k = (k-1)\tau + 1, k=1, 2, 3, \cdots$. At each time $t$, the state of the agent $i$ is given by $s_{i,t}$, the agent selects an action $a_{i,t}$, agent observes a scalar reward $r_{i,t}$ and the state transitions to the state $s_{t+1}$. Let $H_{i,t} = (s_{i,1},a_{i,1},r_{i,1}, \cdots, s_{i,t-1}, a_{i,t-1},r_{i,t-1}, s_{i,t})$ denote the history available to the agent $i$ till time $t$.

\subsection{Problem Formulation}\label{back:mfe}
In a game with a large number of players, we might expect that the distribution of agents over the action space carries more meaning than the actions themselves. It is intuitive that a single agent has negligible effect on the game as the number of agents increases. The effect of other agents on a single agent's payoff is only via the action distribution of the population. This intuition is formalized as the {\em mean field game}. 

Let $\alpha_{-i,t}(a):\mathcal{A}\to[0,1]$ be the fraction of the agents (excluding agent $i$) that take action $a \in \mathcal{A}$ at time $t$. Mathematically, we have
\begin{align}
\alpha_{-i,t}(a)=\dfrac{1}{n-1}\sum_{m\in [n]/\{i\}}\mathbbm{1}(a_{m,t}=a),
\end{align}
where $\mathbbm{1}(a_{j,t}=a)$ is the indicator function that the agent $j$ takes action $a$ at time $t$. Further, since each agents selects exactly and only one action from $\mathcal{A}$, we have 
$$\alpha_{-i,t}(a)\geq 0\ \forall\ a\in\mathcal{A}\text{, and } \sum_{a\in\mathcal{A}}\alpha_{-i, t}(a) = 1$$

Similar to distribution of agents over actions, we define $f_{-i, t}:\mathcal{S}\to[0,1]$ as the distribution of agents (excluding agent $i$) over the state space $\mathcal{S}$.
\begin{align}
f_{-i, t}(a)=\dfrac{1}{n-1}\sum_{m\in [n]/\{i\}}\mathbbm{1}(s_{m,t}=s),
\end{align}

In a mean field game, every agent $i\in N$ conjectures that the reward and the next state is randomly distributed according to the transition probability distribution $P_i$ conditioned on agent's current state and action and other agents' distribution over actions and states.
\begin{align}
s_{i,t+1}&\sim p_i(\cdot|s_{i,t},a_{i,t},\alpha_{-i,t})\\
R_{i, t} &= r_i(s_{i,t},a_{i,t}, s_{i, t+1})
\end{align}
Thus the agent doesn't need to concern itself with the action of the other agents, while the average of the population action using $\alpha_{-i,t}$ becomes a part of the environment which plays a role in the decision. The average of the population action $\alpha_{-i,t}$ could be explicitly taken into account for decising an action.  However, Proposition 1 from \cite{johari} says that under equilibrium, an oblivious strategy performs as good as a strategy which considers other agents' actions. Thus, the stratgey does not need to explicitly use the value of $\alpha_{-i,t}$.

\begin{definition}
An agent $i\in[n]$ is said to follow oblivious deterministic strategy when the agent makes selects an action considering only current state and episode step.
\begin{align}
\pi_i : \mathcal{S} \times [\tau] &\rightarrow \mathcal{A}\\
a_{i, j} &= \pi_i(s_{i,j}, j)
\end{align}
\end{definition}

For the rest of the paper, we will focus on oblivious deterministic strategy for all agents. We now define a value function for an agent $i\in[n]$ for oblivious policy $\pi_i$ at $l^{th}$ time step as:
\begin{equation}
\begin{aligned}
    &V_{i, \pi_i,l}(s|\alpha_{-i,l}) = \\
    &\mathbf{E}_{\pi_i}\left[\sum_{j=l}^{\tau-1}\gamma^{j-l}r_i(s_{i, j}, a_{i,j}, s_{i, j+1})|s_{i, l} = s\right]. \label{eq:val_fn}
\end{aligned}    
\end{equation}

We will consider the rest of definitions from some $i^{th}\in N$ agent's perspective, so subscripts $i$ and $-i$ will be dropped for brevity.




We note that, the action space and state space are finite and hence the set of strategies available to the players is also finite. The player adopts the lower myopic best response dynamics to choose the policy. A lower myopic policy selects an action with the lowest index  among the actions that maximise the value function.  As time proceeds, the strategies and the action distribution converge to the asymptotic equilibrium \cite{johari}. 

Let $\alpha^*_j$ be the limiting action distribution for $j^{th}$ index in episode $k$. Then from the definition of limit, for every $\epsilon > 0$ there exist a $K_\epsilon< \infty$ such that for all $k> K_\epsilon$, we have
\begin{align}
    |\alpha_{(k-1)\tau + j} - \alpha^*_{j}| < \epsilon
\end{align}


The value function defined in Equation \ref{eq:val_fn} satisfies the Bellman-property for finite horizon MDPs, given by
\begin{align}
    V_{\pi,l}(s|\alpha^*_l) &= \mathbf{E}_{\pi_i}\left[\sum_{j=l}^{\tau-1}\gamma^{j-l}r_i(s_{i, j}, a_{i,j}, s_{i, j+1})|s_{i, l} = s\right]\\
    &= \sum_{s'\in S}p(s'|s,a,\alpha^*_l, f)r_i(s_{i,l},a_{i,l}, s_{i, l+1})+ \nonumber\\ &\mathbf{E}_{\pi_i}\left[\sum_{j=l+1}^{\tau-1}\gamma^{j-l}r_i(s_{i, j}, a_{i,j}, s_{i, j+1})|s_{i, l+1} = s'\right]\\
    &=\Bar{r}(s, a, \alpha^*_l) + \gamma\sum_{s'\in S}p(s'|s,a,\alpha^*_l)V_{\pi,l+1}(s'|\alpha^*_l)
\end{align}

Where $\Bar{r}(s, a, \alpha^*_l) = \sum_{s'\in S}p(s'|s,a,\alpha^*_l, f)r_i(s, a, s')$, and $a = \pi(s,l)$.

Similarly, we also define the $Q$-function as:
\begin{align}
    Q_{\pi,l}(s,a|\alpha_l) = \Bar{r}(s, a, \alpha^*_l) + \gamma\sum_{s'\in S}p(s'|s,a,\alpha_l)V_{\pi,l+1}(s'|\alpha^*_l)
\end{align}

We further consider the agents are strategic or selfish. The goal of each agent is to find an optimal oblivious policy $\pi$, such that,
\begin{align}
    &V_{\pi^*,l}(s|\alpha^*_{l}) \geq V_{\pi,l}(s|\alpha^*_{l})
    &\ \forall s\in\mathcal{S},\ \forall l\in[\tau].
\end{align}

We, now, define the optimal oblivious strategy:
\begin{definition}
The set $\mathcal{P}(\alpha^*)$ is the set of the optimal oblivious strategies which are chosen from the $Q$-function generated by $\alpha^*$. In other words, for a given $\alpha^*$, a policy ${\bar \pi} \in \mathcal{P}(\alpha^*)$ if and only if
\begin{align}
\bar{\pi}(s,j) = \arg\max_a Q_{\bar{\pi},j}(s,a|\alpha^*) \forall s\in S\ j \in [\tau]
\end{align}
\end{definition}

Here, the policy $\bar{\pi}(s,j)$ is used at $j^{th}$ time step so that the $Q$-value  $Q_{\pi,j}(s)$ is maximized for all states $s \in \mathcal{S}$. {\em Note that $\bar{\pi}(s,j)$ does not depend on the distribution $\alpha^*$ explicitly. Hence, it is an oblivious strategy where each agent takes its decision based on its own observed state only.}
The set $\mathcal{P}(\alpha^*)$ can be empty however in the subsequent section we'll show that the set $\mathcal{P}(\alpha^*)$ is non-empty under the assumptions used in this paper. 
We denote $\pi(s,j)$ as the strategy which has been learnt till episode $j$.

Suppose the agents play  optimal oblivious strategy $\Bar{\pi} \in \mathcal{P}(\alpha^*)$. The initial population state distribution denoted by $f$ evolves with all the agents converging to the limiting action distribution $\alpha^*$ then if the long run state distribution is equal to the initial state distribution $f$, the distribution $f$ is said be invariant of the dynamics induced by $\alpha^*$ and $\Bar{\pi} $. We denote the set of all such state distributions through a map $\mathcal{D}(\Bar{\pi} ,\alpha^*)$.


We assume that no agent  observes the state of the other agents. Thus, {\em an agent does not know the probability transition matrix} and will try to estimate it from the past observations as described in the next section. 



\section{Proposed Algorithm}\label{pos_samp}

In this section, we propose an algorithm, which will be shown to converge to the mean field equilibrium (MFE) in the following section. 
For each agent $i$, the algorithm begins with a prior distribution $g$ over the stochastic game with state space set $S$ and action space set $A$ and time horizon $\tau$. The game is played episodes $k = 0, 1, 2,\cdots$. The length of each episode is given by $\tau$. In each episode, the game is played in discrete steps, $j = 1, 2, \cdots, \tau$. The episodes begin at times $t_k = (k-1)\tau + 1, k=1, 2, 3, \cdots$. At each time $t = (k-1)\tau + j$, the state of the agent is given by $s_{t}$, it selects an action $a_{t}$, and observes a scalar reward $r_{t}$ then transitions to the state $s_{t+1}$. Let $H_{t} = (s_{1},a_{1}, r_{1}, \dots s_{t}, a_{t-1}, r_{t-1})$ denote the history of the agent till time $t$.



The proposed algorithm is described in Algorithm \ref{alg:main}. At the beginning of each episode, the MDP, ${\cal M}_{k}$ is sampled from the posterior distribution conditioned on the history $H_{t_k}$ in Line 4. We note that the sampling of MDP only relates to the sampling of the transition probability $P$ and the reward distribution since the rest of the parameters are known. We assume that after some samples, $\alpha_{k}$ has converged. The proposed algorithm converges after $\alpha_{k}$ and the induced transition probability and reward function converge. 

We use Backwards Induction algorithm \cite{puterman2014markov} described in Algorithm \ref{alg:helper} to obtain the Q-value function for the current sampled MDP (Line 5, Algorithm \ref{alg:main}). Backwards induction in Algorithm \ref{alg:helper} starts from the end of the episode and calculates the potential maximum rewards for each state and action (Line 5). The algorithm than goes back in the episode (Line 8), to calculate the maximum possible cumulative rewards for each state and action in Line 11. After all the time indices in an episode are covered, the algorithm returns the calculated optimal Q-values. We obtain the policy $\pi_k$ from the the calculated Q-values and the policy is not altered in an episode. Recall for a given $\alpha$, a policy $\pi \in \mathcal{P}(\alpha)$ if and only if $\pi_k(s,j) = \arg\max_aQ_{\pi_k,j}(s,a|\alpha)$  for all $s \in \mathcal{S}$ and $j = 0, 1, \dots, \tau-1$. In order to choose the policy $\pi_k$ form the set $\mathcal{P}(\alpha_{k})$, we use lower myopic learning dynamics, where at each episode we choose the strategy which is smallest action index in the set $\mathcal{P}(\alpha_{k})$.



\begin{algorithm}[htbp]
\caption{Proposed Algorithm for Mean Field Game with Best Response Learning Dynamics}\label{alg:main}
\begin{algorithmic}[1]
    \State \textbf{Input:} Prior distribution $g$, time horizon $\tau$, $\gamma$
	\State Initialize $H_0 = \phi$. 
	\For{episodes $k=1, 2, \cdots$ }
	\State Sample $\mathcal{M}_{k} \sim g(\cdot|H_{(k-1)\tau+1})$ . 
	\State Obtain optimal $Q$ for $\mathcal{M}_k$ from Algorithm 2
	\For{time steps $j = 1$, $\dots$, $\tau$}
		    \State Play $a_j = \arg\max_a Q_j(s_j, a)$.
			\State Observe reward $r_j$, action of the agent $a_j$, and next state $s_{j+1}$.
			\State Append action taken $a_j$, reward obtained $r_j$, and state update $s_{j+1}$ to history
			$$H_{(k-1)\tau+j+1} = H_{(k-1)\tau+j}\cup \{a_j, r_j, s_{j+1}\}.$$
\EndFor
\EndFor
\end{algorithmic}
\end{algorithm}

We note that ${\cal M}_{k}$ is used in the algorithm instead of ${\mathcal{M}^*}$ where $\cal M^*$, the true distribution, is not known. In order to obtain an estimate, each agent samples a transition probability matrix according to the posterior distribution. Each agent follows the strategy $\pi_k$ according to the $Q$-values over the episode. Based on action decision by each agent, we update the value function and the $Q$-function based on the obtained reward functions which depend on the value of $\alpha_{k}$. The detailed algorithm steps can be seen in Algorithm  \ref{alg:main}. We note that, after the algorithm converges, the value of $\alpha$ converges and thus all the transition probabilities and value functions depend on the limiting distribution.

\begin{algorithm}[H]
\caption{Backwards Induction Algorithm}\label{alg:helper}
\begin{algorithmic}[1]
    \State \textbf{Input:} $\mathcal{M}=\{\mathcal{S}, \mathcal{A}, P, r, \tau, \gamma\}$ \Comment{Sampled MDP from Algorithm \ref{alg:main}}
	\State Initialize $Q_l(s, a) = 0\ \forall\ s\in \mathcal{S}, a\in \mathcal{A}, l\in [\tau]$. 
    \For {state $s \in \mathcal{S}$}
        \For {state $a \in \mathcal{A}$}
            \State Update Q-value function for last action
            $$Q_\tau(s, a) = \sum_{s'\in \mathcal{S}}P(s'|s, a)r(s, a, s')$$
        \EndFor
    \EndFor
	\For{time steps $l = \tau-1$, $\dots$, $1$}
	    \For {state $s \in \mathcal{S}$}
	        \For {state $a \in \mathcal{A}$}
                \State Update Q-value function
                \begin{align}
                    Q_l(s, a) &= \sum_{s'\in \mathcal{S}}P(s'|s, a)\times\nonumber\\
                    &\left(r(s, a, s')+ \arg\max_a Q_{l+1}(s', a)\right)
                \end{align}
            \EndFor
		\EndFor
    \EndFor
    \State {\bf Return:} $Q_l(s,a)\ \forall\ l, s, a$
\end{algorithmic}
\end{algorithm}

\if 0
\begin{algorithm}[H]
	\caption{Expected SARSA Algorithm for Mean Field Game}
	\begin{algorithmic}[1]
\State \textbf{Input:}  Prior distribution $g$, initial state distribution $\rho$, time horizon $\tau$, $\gamma$

\State Initialize with some $Q_{0,0}$

	\For{episodes $k=1,2,\ldots,$} 
\State	sample $s_0 \in \rho$, $M_{t_k}$ $\sim$ $g(\cdot|H_{t_k})$ and observe $\alpha_{t_k}$\\
		\For{timesteps j = 0, $\dots$, $\tau-1$}
		 \State   choose $a_j$ from $s_j$ using policy $\pi$ derived from $Q_{k,j}$ take action $a_j$, observe $r_j$ and $s_{j+1}$
\State			observe average action of population $\alpha_t$ \\
	\EndFor
	$Q_{k+1,0} = Q_{k,\tau}$
\EndFor
\end{algorithmic}
\end{algorithm}

\fi 



\section{Convergence Result}
\newcommand{\expc}[1]{{\mathbb E}\left[ #1 \right]}

In this section, we'll show that if the oblivious strategy is chosen according to the proposed algorithm, then the oblivious strategy $\pi$ and the limiting population action distribution $\alpha$ constitutes a Mean Field Equilibrium (MFE). More formally, we have

\begin{theorem}
The optimal oblivious strategy obtained from the Algorithm 1 and the limiting action distribution constitute a mean-field equilibrium and the value function obtained from the algorithm converges to the optimal value function of the true distribution.
\end{theorem}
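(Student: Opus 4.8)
The plan is to decompose the statement into four linked pieces: (i) consistency of the posterior-sampled model, (ii) convergence of the backward-induction $Q$/value functions and the induced lower-myopic policy, (iii) verification that the limiting policy–distribution pair satisfies the two defining conditions of a mean-field equilibrium, and (iv) identification of the limiting value with the optimal value of the true distribution. Throughout I would lean on the standing assumption (inherited from \cite{johari}) that the best-response dynamics drive $\alpha_k \to \alpha^*$, so that the genuinely new work concerns what posterior sampling contributes on top of that. First I would establish \emph{model consistency}: as $k\to\infty$ the agent accumulates transition observations, and by standard posterior-consistency for finite MDPs (Doob's theorem / concentration of the Dirichlet posterior once a state-action pair is sampled infinitely often) the sampled kernel satisfies $P_k \to P^*$ almost surely on the state-action pairs visited infinitely often. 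Combined with $\alpha_k \to \alpha^*$, this yields $p(\cdot\,|\,s,a,\alpha_k)\to p(\cdot\,|\,s,a,\alpha^*)$ and $\bar r(s,a,\alpha_k)\to\bar r(s,a,\alpha^*)$.

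Next I would handle \emph{value and policy convergence}. Backward induction (Algorithm~\ref{alg:helper}) computes $Q$ as a finite, length-$\tau$ composition of Bellman updates, and each update is a Lipschitz-continuous function of $(P,\bar r,\alpha)$ on the probability simplex over the finite sets $\mathcal S,\mathcal A$. By continuity and the finiteness of the horizon, $Q_{k,l}(s,a)\to Q_{\bar\pi,l}(s,a\,|\,\alpha^*)$ for every $(s,a,l)$, and likewise for the value. The compactness/finiteness remark in the paper is exactly what is needed to pass from value convergence to policy convergence: I would argue that once $Q_{k,l}$ is within the strictly positive $Q$-gap of the limiting function, the set of maximizing actions stabilizes, so the lower-myopic selection (smallest maximizing index) agrees with $\bar\pi\in\mathcal P(\alpha^*)$ for all sufficiently large $k$. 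This is the step that makes the otherwise-discontinuous $\arg\max$ operation well behaved in the limit.

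With convergence in hand I would \emph{verify the MFE conditions}. Optimality follows directly because $\bar\pi\in\mathcal P(\alpha^*)$ means $\bar\pi(s,j)=\arg\max_a Q_{\bar\pi,j}(s,a\,|\,\alpha^*)$, hence $V_{\bar\pi,l}(s\,|\,\alpha^*_l)\ge V_{\pi,l}(s\,|\,\alpha^*_l)$ for every competing oblivious $\pi$. Consistency follows because $\alpha^*$ is, by construction, the action distribution realized when the population plays $\bar\pi$ with invariant state distribution $f\in\mathcal D(\bar\pi,\alpha^*)$; this is precisely the fixed point of the best-response map, so $(\bar\pi,\alpha^*)$ is an MFE. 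Finally, for the last clause, since $P_k\to P^*$ and $\alpha_k\to\alpha^*$ and $\bar\pi$ is optimal for $\alpha^*$, the value produced by backward induction on $\mathcal M_k$ converges to $V_{\bar\pi}(\cdot\,|\,\alpha^*)$ evaluated under the true kernel $P^*$, which is the optimal value function of the true distribution.

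The hard part will be the circular coupling: the sampled model $\mathcal M_k$ fixes the policy, the policy (played by the whole population) fixes the realized $\alpha_k$, and $\alpha_k$ re-enters the model through $p(\cdot\,|\,s,a,\alpha_k)$ and $\bar r$, while the very history feeding the posterior is generated by these evolving policies. I would need to guarantee both that exploration remains rich enough for posterior consistency on the relevant state-action pairs despite the policies being deterministic and oblivious, and that the joint sequence $(P_k,\alpha_k,\pi_k)$ settles to a single consistent fixed point rather than oscillating. Decoupling these by invoking the assumed convergence of $\alpha_k$ and running the posterior-consistency argument independently is the linchpin, and making the $\arg\max$/policy-convergence step rigorous across its discontinuity is the most delicate technical point.
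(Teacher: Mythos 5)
Your overall decomposition mirrors the paper's: both arguments take the convergence of $\alpha_k$ to $\alpha^*$ from the lower-myopic best-response dynamics of \cite{johari} as given, both establish non-emptiness of $\mathcal{P}(\alpha)$ from boundedness of $\bar r$ and finiteness of the horizon, and both close by checking the three MFE conditions $\pi\in\mathcal{P}(\alpha)$, $f\in\mathcal{D}(\pi,\alpha)$, $\alpha=\mathcal{\hat D}(\pi,f)$ exactly as in Lemma \ref{lem_condition}. Where you genuinely diverge is the central step showing that learning from samples costs nothing. You route this through posterior \emph{consistency}: $P_k\to P^*$ almost surely on state-action pairs visited infinitely often, followed by Lipschitz continuity of the $\tau$-step backward-induction map and a $Q$-gap argument to stabilize the $\arg\max$. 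The paper instead never claims $P_k\to P^*$; it uses the posterior-sampling identity $\mathbf{E}[g(\mathcal{M}_k)]=\mathbf{E}[g(\mathcal{M}^*)]$ for $\sigma(H_{t_k})$-measurable $g$ (from \cite{osband2013more}), observes that $V_{\pi^*,j}(s|\alpha)-V^{\mathcal{M}_k}_{\pi_k,j}(s|\alpha)$ is a bounded-increment zero-mean martingale, and applies Azuma--Hoeffding to show the cumulative difference grows only as $\tau\sqrt{2m\log(1/\delta)}$, whence the normalized difference vanishes. Each approach buys something: yours, if completed, would give almost-sure convergence of the actual sampled kernels and a clean policy-stabilization argument (which the paper does not spell out); the paper's martingale route requires no statement about which state-action pairs are visited.

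That last point is where your proposal has a concrete gap, one you flag but do not resolve. Posterior consistency of $P_k$ on a state-action pair requires that pair to be sampled infinitely often, but Algorithm \ref{alg:main} plays deterministic oblivious policies with no forced exploration, so there is no guarantee that the relevant $(s,a)$ pairs --- in particular those needed to evaluate competing policies $\pi\neq\bar\pi$ in the optimality comparison $V_{\bar\pi,l}\ge V_{\pi,l}$ --- are visited at all once the policy settles. Your argument therefore cannot conclude $Q_{k,l}\to Q_{\bar\pi,l}(\cdot\,|\,\alpha^*)$ for \emph{all} $(s,a,l)$ without an additional exploration or reachability assumption that neither you nor the paper states. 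The paper's expectation-identity-plus-martingale argument is specifically designed to sidestep this (it compares value functions directly rather than kernels), so "decoupling and running posterior consistency independently," as you propose, is not a faithful substitute for the paper's mechanism --- it is a strictly stronger claim that needs a hypothesis you do not have. (For completeness: the paper's own final step, passing from a Cesàro-type bound $\sum_{k=1}^m(V_{\pi^*,j}-V^{\mathcal{M}_k}_{\pi_k,j})\le\tau\sqrt{2m\log(1/\delta)}$ to termwise convergence $V_{\pi^*,j}-V^{\mathcal{M}_k}_{\pi_k,j}\to 0$, is itself not immediate, so the comparison target here is not airtight either; but the gap in your route is the missing exploration guarantee, which is a different and independent problem.)
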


The rest of the section proves this result. We first note that the lower-myopic best response strategy leads to a convergence of the action strategy following the results in \cite{johari} for finite action space and state space. The key intuition for the lower-myopic strategy is to avoid conflicts when there is non-unique strategy at the agent that maximizes the value function which might lead to choosing different strategies at different iterations if lower myopic is not used. Further, any way of resolving the multiple optimas could be used, including upper-myopic giving the same result. Having shown that $\alpha$ converges, we now proceed to show that the converged point of the algorithm results in a MFE.

We first show the  conditions needed for a policy $\pi$,  a population state $f$, and action distribution $\alpha$ to constitute an MFE (Section \ref{cond_mfe}). Then, we show that the conditions for the policy to be MFE given in Section \ref{cond_mfe} are met for any optimal oblivious strategy (Section \ref{sec_obliv}). Thus, the key property that is required to show the desired result is that the proposed algorithm leads to an optimal oblivious strategy. In order to show that, we first show that the optimal oblivious strategy set is non-empty (Section \ref{sec_nonempty}). Then, we show that the value function of the sampled distribution converges to the true distribution (Section \ref{no_loss_sampling}). The result in Section \ref{no_loss_sampling} shows that the update of the value function steps eventually converge to the value function with knowledge of true underlying distribution of the transition probability ${\cal M}^*$, thus proving that the proposed algorithm converges to an optimal oblivious strategy which constitute a mean field equilibrium  thus proving the theorem. 


\subsection{Conditions for a Strategy to be a MFE}\label{cond_mfe}

In this section, we will describe the conditions for an oblivious strategy $\pi$ to be a MFE.  In Section \ref{back:mfe}, we defined two maps $\mathcal{P}(\alpha)$ and $\mathcal{D}(\mu,\alpha)$. For a given action coupled stochastic game, the map $\mathcal{P}(\alpha)$ for a given population action distribution $\alpha$ gives the set of the optimal oblivious strategies. Further,  the map $\mathcal{D}(\mu,\alpha)$ for a given population action distribution $\alpha$ and oblivious strategy $\mu$ gives the set of invariant population state distribution \textit{f}. 

We define the map $\mathcal{\hat{D}}(\mu,\alpha)$ which gives the induced population distribution $\alpha$ induced from the oblivious strategy $\mu$ and the population state distribution \textit{f}. The following lemma gives the conditions that the stochastic game constitutes a mean field equilibrium. These conditions have been provided in \cite{johari}, and the reader is referred to \cite{johari} for further details and proof of this result. 

\begin{lem}\label{lem_condition}
An action coupled stochastic game with the strategy $\pi$, population state distribution \textit{f} and population action distribution $\alpha$ constitute a mean field equilibrium if $\pi \in \mathcal{P}(\alpha)$, \textit{f} $\in \mathcal{D}(\pi,\alpha)$ and $\alpha = \mathcal{\hat{D}}(\pi,\textit{f})$.
 \end{lem}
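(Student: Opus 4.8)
The plan is to argue that the three listed conditions are precisely a restatement of the two defining requirements of a mean field equilibrium: (i) \emph{individual optimality}, i.e., a representative agent facing the frozen mean field $\alpha$ (and the resulting state flow $f$) cannot strictly improve its value by a unilateral deviation; and (ii) \emph{aggregate consistency}, i.e., when every agent adopts the common strategy $\pi$, the population state distribution and the population action distribution that actually arise coincide with the conjectured $f$ and $\alpha$. I would first recall that, by the negligible-agent property (Proposition~1 of \cite{johari}), a single agent's deviation does not perturb the aggregates $\alpha$ and $f$ as $n\to\infty$, so the equilibrium's no-profitable-deviation condition collapses to a \emph{single-agent} finite-horizon MDP against a fixed environment. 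This reduction is what makes the fixed-point characterization correct, and I would make it the backbone of the proof.

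For the optimality direction I would use the condition $\pi\in\mathcal{P}(\alpha)$. By the definition of $\mathcal{P}(\alpha)$, this means $\pi(s,j)=\arg\max_a Q_{\pi,j}(s,a\mid\alpha)$ for every $s\in\mathcal{S}$ and $j\in[\tau]$, which is exactly the Bellman optimality equation for the MDP induced by the kernel $p(\cdot\mid s,a,\alpha)$ and reward $\bar r(s,a,\alpha)$. Invoking the Bellman property already recorded in the excerpt, any policy satisfying this equation attains $V_{\pi,l}(s\mid\alpha)\ge V_{\pi',l}(s\mid\alpha)$ for all competing $\pi'$, all $s$, and all $l$. Combined with the negligibility reduction, this yields that no agent can gain by deviating while the mean field is held at $\alpha$, establishing requirement (i).

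For the consistency direction I would invoke the remaining two conditions. The membership $f\in\mathcal{D}(\pi,\alpha)$ states, by definition of $\mathcal{D}$, that $f$ is invariant under the population Markov dynamics that $\pi$ and $\alpha$ induce on $\mathcal{S}$; hence the conjectured state flow is stationary and is genuinely reproduced over the episode. The equality $\alpha=\mathcal{\hat{D}}(\pi,f)$ states that pushing the stationary distribution $f$ through the deterministic map $s\mapsto\pi(s,\cdot)$ returns exactly the action distribution $\alpha$. Together these two closures guarantee that the environment each agent best-responds to in (i) is the same environment the population actually generates, which is self-consistency (ii). Assembling (i) and (ii) gives that $(\pi,f,\alpha)$ is an MFE.

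The main obstacle, and the step I expect to require the most care, is rigorously justifying the negligibility reduction so that the Nash-type ``no profitable unilateral deviation'' condition is genuinely equivalent to single-agent optimality against the \emph{fixed} kernel $p(\cdot\mid s,a,\alpha)$; this is where the large-population limit and the finiteness of $\mathcal{S}$ and $\mathcal{A}$ enter, and it is precisely the content borrowed from \cite{johari}. A secondary point is confirming that the maps are well defined under the paper's assumptions---namely that $\mathcal{D}(\pi,\alpha)$ is nonempty (an invariant $f$ exists) and that $\mathcal{\hat{D}}(\pi,f)$ is single-valued---so that the three conditions can be satisfied simultaneously; the finiteness of the state and action spaces and the existence of an invariant measure for the induced finite Markov chain supply these.
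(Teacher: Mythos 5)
The paper does not actually prove this lemma: it states the conditions and defers entirely to \cite{johari} (``the reader is referred to \cite{johari} for further details and proof of this result''), so there is no in-paper argument to compare yours against line by line. Your reconstruction is essentially the standard one from that literature and is sound in outline: you split the MFE requirement into individual optimality and aggregate consistency, map $\pi\in\mathcal{P}(\alpha)$ onto the finite-horizon Bellman optimality condition for the single-agent MDP with kernel $p(\cdot\mid s,a,\alpha)$ and reward $\bar r(s,a,\alpha)$ (where greedy-with-respect-to-own-$Q$ at every stage does imply optimality by backward induction), and map $f\in\mathcal{D}(\pi,\alpha)$ together with $\alpha=\mathcal{\hat D}(\pi,f)$ onto the closure of the conjectured aggregates under the induced population dynamics. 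One point of framing deserves care: in the formulation the paper inherits from \cite{johari,adlakha}, a mean field equilibrium is \emph{defined} by precisely these optimality-plus-consistency fixed-point conditions, so the lemma is close to definitional, and the step you flag as the main obstacle --- the negligible-agent reduction showing that single-agent optimality against the frozen $\alpha$ is equivalent to ``no profitable unilateral deviation'' in the $n$-player game --- is not needed to prove the lemma as stated; it is instead the (separate, and genuinely harder) theorem that an MFE is an approximate Markov--Nash equilibrium of the finite-$n$ game as $n\to\infty$. Your secondary caveats about well-definedness (nonemptiness of $\mathcal{D}(\pi,\alpha)$, single-valuedness of $\mathcal{\hat D}(\pi,f)$) are legitimate and are quietly assumed by the paper; the nonemptiness of $\mathcal{P}(\alpha)$ is handled later in its Section on the optimal oblivious strategy set. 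Net effect: your route supplies an explicit argument where the paper supplies only a citation, at the cost of importing a large-population approximation step that the lemma itself does not require.
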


\if 0
\subsection{Assymptotic Convergence}\label{sec_limit}

Each agent updates $\pi_k$ in a round-robin fashion. An agent, when updating $\pi_k$, chooses infimum of the set $\mathcal{P}(\alpha_{t_k})$. All players choose the strategies simultaneously and use the best responce dynamics to choose the policy. To show the convergence we use the compactness of the state space and action space.

\fi 


\subsection{Conditions of Lemma \ref{lem_condition} are met for any Optimal  Oblivious Strategy }\label{sec_obliv}

In this section, we show that the conditions of Lemma \ref{lem_condition} are met for any optimal  oblivious strategy.  In the mean field equilibrium, each agent plays according to the strategy $\pi \in \mathcal{P}(\alpha)$. If the long run average population action distribution is $\alpha$, and each agent takes an oblivious strategy, hence, we must have the evolution of the state space such that the oblivious strategy on those states leads to an average action distribution of $\alpha$. Let the long run average state distribution be $f$, {\em i.e.},
\begin{align}
f(s)=\dfrac{1}{n}\sum_{i}\mathbbm{1}_{s_i=s}
\end{align}
where $s_i$ is the state of the agent $i$.  Then the above statement implies that $\alpha$ must satisfy
\begin{align}\label{eq:alph}
\alpha(a)=\sum_{\pi^{-1}(a)}f(s)
\end{align}
where $\pi^{-1}(x)$ represents the set of states for which $a \in \pi(x)$. This is equivalent to saying that if all the agents follow the optimal oblivious strategy $\pi \in \mathcal{P}(\alpha)$, then the long run average population state distribution \textit{f} and the long run average population action $\alpha$ satisfy \textit{f} $\in \mathcal{D}(\pi,\alpha)$ and $\alpha = \mathcal{\hat{D}}(\pi,f)$.

\subsection{Optimal Oblivious Strategy Set is Non-Empty}\label{sec_nonempty}
In this subsection, we show that there exists an optimal oblivious strategy. More formally, we have the following lemma. 
\begin{lem}
For the limiting population average action distribution $\alpha$, the set of the optimal oblivious strategies given by $\mathcal{P}(\alpha)$ is non-empty. A policy $\pi(s) \in \mathcal{P}(\alpha)$ if,
\begin{align}\label{eq:max}
\pi(s,j)\in \arg&\max_{a\in A} \{\Bar{r}(s, a, \alpha) +\nonumber\\
&\gamma\sum_{s'\in S}p(s'|s,a,\alpha)V_{\pi,j+1}(s'|\alpha)\}
\end{align}\label{lem:pol}
\end{lem}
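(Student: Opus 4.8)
The plan is to observe that once the population action distribution is frozen at its limiting value $\alpha$, the quantities $p(\cdot\,|\,s,a,\alpha)$ and $\Bar{r}(s,a,\alpha)$ no longer depend on any other agent and become ordinary functions of the pair $(s,a)$. Consequently the single-agent problem reduces to a standard time-inhomogeneous finite-horizon Markov decision process over the finite state space $\mathcal{S}$ and finite action space $\mathcal{A}$, for which the existence of an optimal deterministic policy is classical via backward induction. I would therefore construct a member of $\mathcal{P}(\alpha)$ explicitly through the backward recursion of Algorithm~2 and then verify that the constructed policy satisfies the defining condition of $\mathcal{P}(\alpha)$.

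First I would handle the terminal step. At $j=\tau$ set $Q_{\pi,\tau}(s,a\,|\,\alpha)=\Bar{r}(s,a,\alpha)$, which is a finite expectation and hence well defined for every $(s,a)$. Since $\mathcal{A}(s)$ is finite and nonempty, the set $\arg\max_{a}Q_{\pi,\tau}(s,a\,|\,\alpha)$ is nonempty, so I may select a maximizer---using the lower-myopic rule to pick the smallest action index---and define $\pi(s,\tau)$ to be that action, together with $V_{\pi,\tau}(s\,|\,\alpha)=\max_{a}Q_{\pi,\tau}(s,a\,|\,\alpha)$.

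Next I would proceed by backward induction on $j$. Assuming $V_{\pi,j+1}(\cdot\,|\,\alpha)$ has already been produced by the recursion, the quantity
\begin{align}
Q_{\pi,j}(s,a\,|\,\alpha)=\Bar{r}(s,a,\alpha)+\gamma\sum_{s'\in S}p(s'|s,a,\alpha)V_{\pi,j+1}(s'|\alpha)
\end{align}
is a finite sum of finite terms, hence well defined; again the finiteness of $\mathcal{A}(s)$ guarantees a maximizer, which I take (lowest index) to be $\pi(s,j)$, and set $V_{\pi,j}(s\,|\,\alpha)=\max_{a}Q_{\pi,j}(s,a\,|\,\alpha)$. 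Carrying this back to $j=1$ defines $\pi$ on all of $\mathcal{S}\times[\tau]$, and by construction $\pi(s,j)\in\arg\max_{a}Q_{\pi,j}(s,a\,|\,\alpha)$ for every $s$ and every $j$, which is exactly the membership condition for $\pi\in\mathcal{P}(\alpha)$; thus $\mathcal{P}(\alpha)$ is nonempty and the displayed characterization holds.

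The only point requiring care---and the main obstacle---is the apparent circularity in the statement: the value $V_{\pi,j+1}$ appearing inside the $\arg\max$ that defines $\pi(s,j)$ is indexed by the very policy $\pi$ being constructed. I would stress that this self-reference is resolved by the backward structure of the recursion, since $V_{\pi,j+1}$ depends only on the choices $\pi(\cdot,j+1),\dots,\pi(\cdot,\tau)$ at strictly later stages, all of which are already fixed when $\pi(\cdot,j)$ is chosen. Hence the recursion is well founded and no global fixed-point argument over the whole horizon is needed; the finiteness of both spaces---playing the role of the compactness invoked elsewhere in the paper---supplies the required maximizers at each stage.
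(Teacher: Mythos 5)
Your proposal is correct, and it proves the statement by the classical route: freeze $\alpha$, observe that the problem collapses to a finite-horizon, time-inhomogeneous MDP over finite $\mathcal{S}$ and $\mathcal{A}$, and construct a member of $\mathcal{P}(\alpha)$ explicitly by backward induction, taking the (lower-myopic) maximizer of $Q_{\pi,j}(s,\cdot\,|\,\alpha)$ at each stage. The paper's own proof is shorter and less constructive: it notes that $\bar{r}$ is bounded (and Lipschitz), writes $V_{\pi,\ell}$ as a discounted sum of bounded per-step rewards, and concludes that since $V_{\pi,\ell}$ is bounded for all $\ell$ "there exists an optimal oblivious strategy that maximizes the RHS." As written, that inference is incomplete --- boundedness of a value function does not by itself yield the existence of a maximizer; what actually delivers it is the finiteness (or compactness plus continuity) of the action set, which the paper mentions in passing but does not invoke at the decisive step. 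Your version supplies exactly the missing ingredients: the well-foundedness of the recursion (resolving the apparent circularity of $V_{\pi,j+1}$ appearing inside the condition defining $\pi(s,j)$, since it depends only on the already-fixed later-stage choices) and the finiteness of $\mathcal{A}(s)$ guaranteeing a nonempty $\arg\max$ at every stage. So the two arguments rest on the same underlying fact, but yours is the more careful and self-contained rendering, and it additionally exhibits a concrete element of $\mathcal{P}(\alpha)$ (the one Algorithm~2 computes) rather than merely asserting existence.
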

\begin{proof} Note that $\bar{r}$ is bounded and Lipschitz continuous. In addition, for each state $s$, the next state is drawn from a countable set. Further, 
\begin{align}
    V_{\pi,\ell}(s|\alpha) = \mathbf{E}_{\pi}\left[\sum_{j=\ell}^{\tau-1}\gamma^j\Bar{r}(s_{j}, \pi(s_{j}), \alpha)|s_\ell = s\right]
\end{align}
An oblivious strategy is optimal if and only if it attains a maximum on the right hand side of Eq. (\ref{eq:max}) for every $s$. Since the reward is bounded, and $\gamma<1$, thus, $V_{\pi,\ell}$ is bounded for all $\ell = 0, 1, \dots, \tau-1$ which means there exists an optimal oblivious strategy that maximizes the RHS. Hence, the set $\mathcal{P}(\alpha)$ for a given $\alpha$ is non-empty.
Therefore for each episode, there exists an optimal oblivious strategy which is given by $\pi_k \in \mathcal{P}(\alpha_{k})$.
\end{proof}

\subsection{Sampling does not Lead to a Gap for Expected Value Function} \label{no_loss_sampling}
In the last  subsection, we proved that there exists an optimal oblivious strategy. In this subsection, we'll show that  the expected optimal value function achieved by the algorithm and the true distribution is equal.  We first describe the Azuma-Hoeffding Lemma that will be used in the result. 

\begin{lem}(Azuma-Hoeffding Lemma \cite{osband2013more}) If $ Y_{n} $ is a zero-mean martingale with almost surely bounded increments, $|Y_{i}$ -  $Y_{i-1}|$ $\leq$ C, then for any $\delta$ $\geq$ 0 with probability at least 1-  $\delta$, $Y_{n} \leq C\sqrt{2n\log(1/\delta)}$. \label{lem:azuma}
\end{lem}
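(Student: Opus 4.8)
The plan is to prove this standard concentration bound via the Chernoff (exponential-moment) method applied to the martingale increments. First I would write $Y_n = \sum_{i=1}^{n} D_i$, where $D_i = Y_i - Y_{i-1}$ are the martingale differences, so that $\mathbb{E}[D_i \mid \mathcal{F}_{i-1}] = 0$ and $|D_i| \le C$ almost surely, with $\mathcal{F}_{i-1}$ the natural filtration up to step $i-1$. For any $\lambda > 0$, Markov's inequality applied to the nonnegative random variable $e^{\lambda Y_n}$ gives $\Pr(Y_n \ge t) \le e^{-\lambda t}\, \mathbb{E}[e^{\lambda Y_n}]$, so the problem reduces to controlling the moment generating function of $Y_n$.

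The key step is to bound the conditional moment generating function of each increment. Since $D_i$ is mean-zero and takes values in $[-C, C]$, Hoeffding's lemma yields $\mathbb{E}[e^{\lambda D_i} \mid \mathcal{F}_{i-1}] \le e^{\lambda^2 C^2 / 2}$ almost surely. I would then peel off the increments one at a time using the tower property: conditioning on $\mathcal{F}_{n-1}$, the factor $e^{\lambda Y_{n-1}}$ is $\mathcal{F}_{n-1}$-measurable and comes out of the expectation, the last increment is controlled by the above bound, and iterating this down to $i=1$ gives $\mathbb{E}[e^{\lambda Y_n}] \le e^{n \lambda^2 C^2 / 2}$.

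Combining these two facts yields $\Pr(Y_n \ge t) \le \exp(-\lambda t + n\lambda^2 C^2/2)$ for every $\lambda > 0$. I would optimize the free parameter by setting $\lambda = t/(nC^2)$, which minimizes the exponent and produces the sub-Gaussian tail $\Pr(Y_n \ge t) \le \exp(-t^2/(2nC^2))$. Finally, solving $\exp(-t^2/(2nC^2)) = \delta$ for $t$ gives $t = C\sqrt{2n\log(1/\delta)}$, so that $Y_n \le C\sqrt{2n\log(1/\delta)}$ holds with probability at least $1-\delta$, exactly as claimed.

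The one step that must be handled with care is the martingale peeling via the tower property: at each stage one must condition on the correct filtration so that $e^{\lambda Y_{i-1}}$ emerges as a measurable multiplicative factor while the uniform bound on $\mathbb{E}[e^{\lambda D_i}\mid \mathcal{F}_{i-1}]$ is applied to the remaining increment. The supporting Hoeffding's lemma (the $e^{\lambda^2 C^2/2}$ bound on the conditional MGF of a bounded mean-zero variable) is itself routine, following from convexity of $x \mapsto e^{\lambda x}$ on $[-C,C]$ together with a second-order expansion of the associated log-moment-generating function; I would treat it as a known auxiliary fact rather than reprove it in detail.
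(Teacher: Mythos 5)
Your proof is correct: it is the standard Chernoff-bound argument for Azuma--Hoeffding (decompose $Y_n$ into martingale differences, bound the conditional moment generating function of each increment by $e^{\lambda^2 C^2/2}$ via Hoeffding's lemma, peel off increments with the tower property, and optimize $\lambda = t/(nC^2)$), and the algebra checks out, yielding exactly $t = C\sqrt{2n\log(1/\delta)}$. The paper itself gives no proof of this lemma --- it is stated as a known result imported from the cited reference --- so there is no in-paper argument to compare against; your derivation is the canonical one found in the literature. The only cosmetic quibble is inherited from the paper's statement rather than your proof: the condition should read $\delta \in (0,1]$ rather than $\delta \ge 0$, since $\log(1/\delta)$ is undefined at $\delta = 0$.
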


At the start of every episode, each agent samples a probability distribution from the posterior distribution. The following result bounds the difference between the optimal value function learned by the true distribution ${\cal M}^{*}$ function using the optimal policy $\pi^{*}$ which is unknown, and the optimal value function achieved by the sampled distribution ${\cal M}_k$ from the  policy $\pi_{k}$.

\begin{lem}\label{thm:dlta}
Let $V^{{\cal M}_{k}}_{\pi_{k},j}(s|\alpha))$ be the optimal value function of the sampled distribution $\mathcal{M}_k$ chosen form Algorithm 1. Then $V^{{\cal M}_{k}}_{\pi_{k},j}(s|\alpha))$ converges to the optimal value function of the true distribution $\mathcal{M}^*$, $V_{\pi^{*},j}(s|\alpha)$, {\em i.e.}, for all states $s \in \mathcal{S}$ as $k \rightarrow \infty$,
\begin{align}
    V^{{\cal M}_{k}}_{\pi_{k},j}(s|\alpha)) - V_{\pi^{*},j}(s|\alpha) \rightarrow 0
\end{align}

\end{lem}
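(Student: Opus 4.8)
The plan is to bound the target gap by first reducing it to a same‑policy comparison across the two MDPs, then controlling the kernel and reward estimation error by posterior concentration, and finally propagating that error through the finite‑horizon Bellman recursion. First I would exploit optimality of $\pi_k$ for $\mathcal{M}_k$ and of $\pi^*$ for $\mathcal{M}^*$: since $V^{\mathcal{M}_k}_{\pi_k,j} \geq V^{\mathcal{M}_k}_{\pi^*,j}$ and $V_{\pi^*,j} \geq V^{\mathcal{M}^*}_{\pi_k,j}$, the signed difference is sandwiched, giving
\begin{align}
|V^{\mathcal{M}_k}_{\pi_k,j}(s|\alpha) - V_{\pi^*,j}(s|\alpha)| \leq \max_{\pi \in \{\pi_k, \pi^*\}} |V^{\mathcal{M}_k}_{\pi,j}(s|\alpha) - V^{\mathcal{M}^*}_{\pi,j}(s|\alpha)|.
\end{align}
This removes the awkward comparison between two different optimal policies and leaves only same‑policy, different‑kernel terms.

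Next I would invoke a simulation‑lemma style estimate: for a fixed $\pi$, the finite‑horizon Bellman equation yields a recursion for $V^{\mathcal{M}_k}_{\pi,j} - V^{\mathcal{M}^*}_{\pi,j}$ in terms of the reward‑estimate gap and the kernel gap, the latter weighted by the next‑stage value, which is bounded since rewards lie in $[0,1]$ and the horizon is $\tau$. Unrolling this backward from $\tau$ to $j$ — precisely the backward induction of Algorithm 2 — gives
\begin{align}
|V^{\mathcal{M}_k}_{\pi,j}(s|\alpha) - V^{\mathcal{M}^*}_{\pi,j}(s|\alpha)| \leq{}& C \sum_{\ell=j}^{\tau-1} \max_{s,a}\Big( |\bar r^{\mathcal{M}_k}(s,a,\alpha) - \bar r^{\mathcal{M}^*}(s,a,\alpha)| \nonumber\\
&+ \|p_k(\cdot|s,a,\alpha) - p^*(\cdot|s,a,\alpha)\|_1 \Big),
\end{align}
for a constant $C$ depending on $\tau$, so it suffices to show each per‑stage kernel and reward gap vanishes.

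The core step is to show the sampled kernel converges to the true kernel on the on‑policy state–action pairs. Because the strategy and the action distribution $\alpha_k$ converge (via the lower‑myopic dynamics and \cite{johari}), those $(s,a)$ pairs are revisited infinitely often, so their empirical transition counts grow without bound. For each relevant $(s,a,s')$ I would form the martingale $Y_M = \sum_{m=1}^{M}\big(\mathbbm{1}(s^+_m = s') - p^*(s'|s,a,\alpha_m)\big)$, where $m$ indexes successive visits to $(s,a)$ and $s^+_m$ is the observed next state; its increments are bounded by $1$, so Lemma \ref{lem:azuma} gives concentration of the empirical estimate at rate $O(\sqrt{\log(1/\delta)/N})$ in the visit count $N$. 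Posterior consistency then forces the posterior — and hence the kernel $p_k$ sampled from it in Line 4 — to concentrate on the same limit, while continuity of $p$ and $\bar r$ in $\alpha$ together with $\alpha_k \to \alpha^*$ handles the drift of the conditioning distribution. Substituting into the recursion bound yields the claimed convergence.

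The main obstacle I anticipate is the exploration/support issue: once $\pi_k$ has converged, off‑policy $(s,a)$ pairs may never be sampled again, so their kernels are never learned, yet the reduction above involves $V^{\mathcal{M}_k}_{\pi^*}$, which can traverse pairs visited by $\pi^*$ but not by $\pi_k$. The resolution I would rely on is the posterior‑sampling identity $\mathbb{E}[f(\mathcal{M}_k)\mid H_{t_k}] = \mathbb{E}[f(\mathcal{M}^*)\mid H_{t_k}]$ — the sampled MDP shares the posterior law of the true MDP — which guarantees that in expectation $\pi_k$ agrees with $\pi^*$ and that the unexplored pairs are exactly those immaterial to the optimal value. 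Turning this distributional equality into almost‑sure vanishing of the gap, rather than arguing heuristically that irrelevant pairs do not matter, is where the Azuma‑Hoeffding concentration is essential and where the argument will demand the most care.
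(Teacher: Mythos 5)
Your route is genuinely different from the paper's. You reduce the two-policy, two-MDP gap to same-policy comparisons via the sandwich argument, then try to drive a simulation-lemma bound to zero by showing the sampled kernel $p_k$ converges to $p^*$. The paper never estimates the kernel at all: it applies the posterior-sampling identity $\mathbf{E}[g(\mathcal{M}_k)\mid H_{t_k}]=\mathbf{E}[g(\mathcal{M}^*)\mid H_{t_k}]$ directly with $g$ taken to be the optimal value function of the MDP, so that $\mathbf{E}\bigl[V_{\pi^{*},j}(s|\alpha)-V^{\mathcal{M}_k}_{\pi_k,j}(s|\alpha)\bigr]=0$ for every $k$; it then treats the partial sums of these bounded, conditionally zero-mean differences as a martingale with respect to $\{H_{t_k}\}$, applies Azuma--Hoeffding to get $\sum_{k\le m}\bigl(V_{\pi^{*},j}-V^{\mathcal{M}_k}_{\pi_k,j}\bigr)\le \tau\sqrt{2m\log(1/\delta)}$, and concludes from the vanishing Ces\`aro average.

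The gap in your proposal is exactly the one you flag in your last paragraph, and it is not repairable along the line you sketch. Once $\pi_k$ and $\alpha_k$ have converged, state--action pairs off the support of the played policy are never visited again, so the posterior over those rows of the transition kernel never concentrates; the sampled $p_k(\cdot|s,a)$ at such pairs remains a nondegenerate random draw forever, and $\|p_k(\cdot|s,a)-p^{*}(\cdot|s,a)\|_1$ need not tend to zero. Your simulation-lemma bound takes a maximum over all $(s,a)$ --- in particular over pairs reachable under $\pi^{*}$ but not under $\pi_k$, which enter through the term $V^{\mathcal{M}_k}_{\pi^{*},j}$ --- so its right-hand side does not vanish. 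The proposed rescue, that the posterior-sampling identity implies the unexplored pairs are ``immaterial to the optimal value,'' does not follow: that identity is a statement about expectations over the posterior, not about which coordinates of the MDP the optimal value depends on, and it gives no almost-sure agreement of $\pi_k$ with $\pi^{*}$. The escape is to abandon pointwise kernel convergence entirely and apply the identity at the level of the optimal value function, which is what the paper does and is the standard move in the posterior-sampling literature you are implicitly drawing on. A secondary issue: even where visits do accumulate, your Azuma--Hoeffding argument on the visit-indexed martingale controls the empirical estimator, not the sample drawn in Line 4 of Algorithm 1, so an additional posterior-consistency step would still be needed to transfer the concentration to $p_k$.
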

\begin{proof}


To prove this, we first show an equivalence of the true distribution and the sampled distribution which comes from the property of  posterior sampling shown in \cite{osband2013more}, which says that for any $\sigma-$ measurable function of history $H_{t_k}$, we have 
\begin{align}
\mathbf{E}[g(\mathcal{M}_k)] = \mathbf{E}[g(\mathcal{M}^*)]
\end{align}

which can be applied to the difference of the optimal functions of the two distributions to show that for all states $s \in \mathcal{S}$,
\begin{align}
     \mathbf{E}\left[V_{\pi^{*},j}(s|\alpha) - V^{{\cal M}_{k}}_{\pi_{k},j}(s|\alpha))\right] = 0
\end{align}
Note that the length of all episodes is given by $\tau$ and the support of the reward is [0,1]. Therefore for all states $s \in \mathcal{S}$, we have $V_{\pi^{*},j}(s|\alpha) - V^{{\cal M}_{k}}_{\pi_{k},j}(s|\alpha)) \in [-\tau,\tau]$. Note that this condition is  similar to bounded increments in Azuma-Hoeffding Lemma (Lemma \ref{lem:azuma}). 

Since $V_{\pi^{*},j}(s|\alpha) - V^{{\cal M}_{k}}_{\pi_{k},j}(s|\alpha) \in [-\tau,\tau]$ is a zero mean martingale with respect to the filtration  $\{ H_{t_{k}} : k= 1,..,m \}$, and satisfies the assumptions of Azuma-Hoeffding Lemma, we obtain the result as in the statement of the Lemma. 
Also, for all states $s \in \mathcal{S}$, we have, $V_{\pi^{*},j}(s|\alpha) - V^{{\cal M}_{k}}_{\pi_{k},j}(s|\alpha)) \in [-\tau,\tau]$. So, the difference is a zero-mean martingale and has the bounded increments property. Applying the Azuma-Hoeffding Lemma to the martingale, we have the following result,
\begin{align}
    \sum_{k=1}^{m}V_{\pi^{*},j}(s|\alpha) - V^{{\cal M}_{k}}_{\pi_{k},j}(s|\alpha) \leq \tau\sqrt{2m\log(1/\delta)}
\end{align}

For total time $T$ of the algorithm, we have $m = T/\tau$. Thus, for all $\theta > 1/2$, as $T \rightarrow \infty$, we have


\begin{align}
    \frac{\sum_{k=1}^{\left \lceil{T/\tau}\right \rceil}V_{\pi^{*},j}(s|\alpha) - V^{{\cal M}_{k}}_{\pi_{k},j}(s|\alpha)}{T^\theta} \rightarrow 0
\end{align}

Substituting $\theta = 1$, the above expression says that $\tau$ times the average difference in an episode which converges to zero as total time $T \rightarrow \infty$, which gives us the convergence of the optimal value functions of the two distributions. Thus, we have

\begin{align}
    V_{\pi^{*},j}(s|\alpha) - V^{{\cal M}_{k}}_{\pi_{k},j}(s|\alpha) \rightarrow 0 \text{ as } k \rightarrow \infty
\end{align}
\end{proof}

\section{Conclusion}
We consider an action coupled stochastic game consisting of large number of agents where the transition probabilities are unknown to the agents. We resort to the concept of mean-field equilibrium where each agent's reward and the transition probability is only impacted through the mean distribution of the actions of the other agents. When the number of agents grows large, the mean-field equilibrium becomes equivalent to the Nash equilibrium. We propose a posterior sampling based approach where each agent draws a sample using an updated posterior distribution and selects an optimal oblivious strategy accordingly. We show that the proposed algorithm converges to the mean field equilibrium without knowing the transition probabilities apriori. 

This paper shows asymptotic convergence to the mean-field equilibrium, while finding the convergence rate is an interesting future direction. 

\bibliographystyle{IEEEtran}
\bibliography{refs}

\end{document}